\let\oldnl\nl% Store \nl in \oldnl
\newcommand{\nonl}{\renewcommand{\nl}{\let\nl\oldnl}}% 
\newenvironment{tightlist}%
{\begin{list}{$\bullet$}{%
    \setlength{\topsep}{0in}
    \setlength{\partopsep}{0in}
    \setlength{\itemsep}{0in}
    \setlength{\parsep}{0in}
    \setlength{\leftmargin}{1.5em}
    \setlength{\rightmargin}{0in}
}
}%
{\end{list}
}
\newcommand{\Ours}{PG3}
\renewcommand{\O}{\mathcal{O}}
\newtheorem{defn}{Definition}
\newtheorem{assumption}{Assumption}
\title{\Ours{}: Policy-Guided Planning for Generalized Policy Generation}
\author{
Ryan Yang\footnote{Equal contribution.}\and
Tom Silver$^*$\and
Aidan Curtis\and
Tom\'{a}s Lozano-P\'{e}rez\And
Leslie Pack Kaelbling
\\
\affiliations
MIT Computer Science and Artificial Intelligence Laboratory\\
\emails
\{ryanyang, tslvr, curtisa, tlp, lpk\}@mit.edu
}
\begin{document}

\maketitle

\begin{abstract}
A longstanding objective in classical planning is to synthesize policies that generalize across multiple problems from the same domain. In this work, we study generalized policy search-based methods with a focus on the score function used to guide the search over policies. We demonstrate limitations of two score functions and propose a new approach that overcomes these limitations. The main idea behind our approach, Policy-Guided Planning for Generalized Policy Generation (PG3), is that a candidate policy should be used to guide planning on training problems as a mechanism for evaluating that candidate. Theoretical results in a simplified setting give conditions under which PG3 is optimal or admissible. We then study a specific instantiation of policy search where planning problems are PDDL-based and policies are lifted decision lists. Empirical results in six domains confirm that PG3 learns generalized policies more efficiently and effectively than several baselines. Code: \href{https://github.com/ryangpeixu/pg3}{https://github.com/ryangpeixu/pg3}
\end{abstract}

\section{Introduction}
\label{sec:intro}

How can we compile a transition model and a set of training tasks into a reactive policy?
Can these policies generalize to large tasks that are intractable for modern planners?
These questions are of fundamental interest in AI planning \cite{triangletables}, with progress in \emph{generalized planning} recently accelerating~\cite{srivastava2011foundations,bonet2015policies,jimenez2019review,rivlin2020generalized}.

\emph{Generalized policy search} (GPS) is a flexible paradigm for generalized planning~\cite{levine2003learning,segovia2021generalized}.
In this family of methods, a search is performed through a class of generalized (goal-conditioned) policies, with the search informed by a \emph{score function} that maps candidate policies to scalar values.
While much attention has been given to different policy representations, there has been relatively less work on the score function.
The score function plays a critical role: if the scores are uninformative or misleading, the search will languish in less promising regions of policy space.

In this work, we propose Policy-Guided Planning for Generalized Policy Generation (\Ours{}), which centers around a new score function for GPS.
Given a candidate policy, \Ours{} solves the set of training tasks \emph{using the candidate to guide planning}.
When a plan is found, the agreement between the plan and the policy contributes to the score.
Intuitively, if the policy is poor, the planner will ignore its guidance or fail to find a plan, and the score will be poor; if instead the policy is nearly able to solve problems, except for a few ``gaps'', the planner will rely heavily on its guidance, for a good score.

\begin{figure*}[t]
  \centering
    \noindent
    \includegraphics[width=\textwidth]{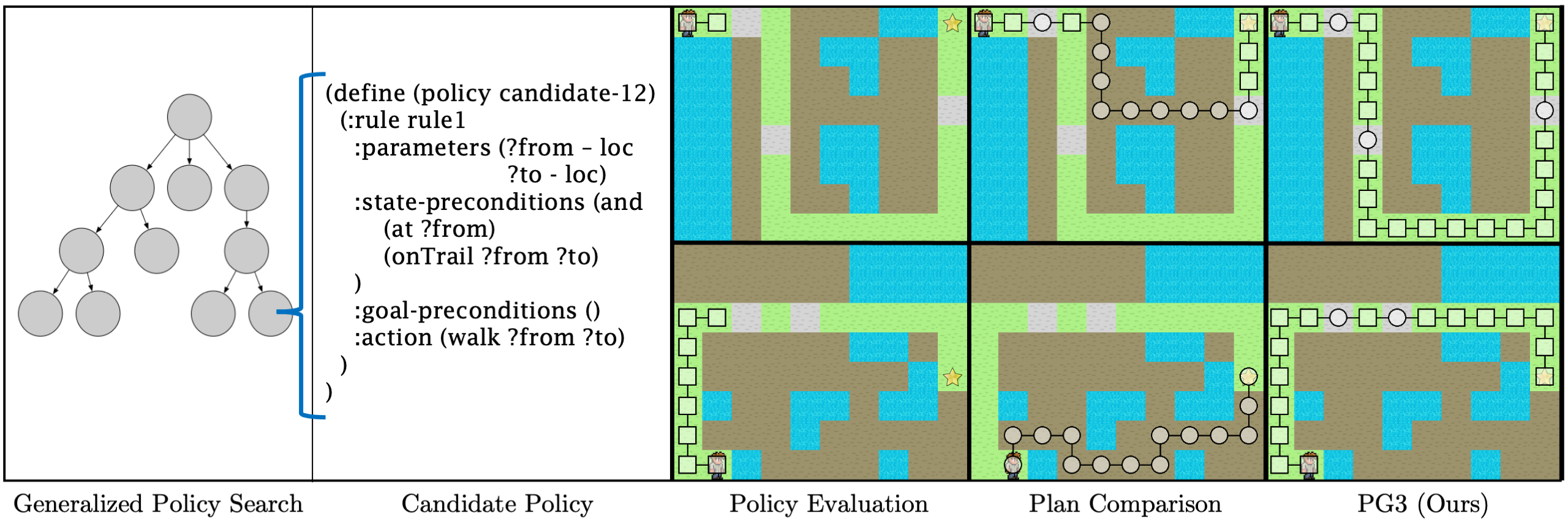}
    \caption{(Left) In this work, we study generalized policy search (GPS). Each node represents a candidate policy. (Left middle) A candidate policy for the Forest domain. In this domain, the agent \emph{can} walk along the marked trail (green), but it can also walk in the dirt (brown) as long as it avoids water (blue). The candidate policy, which has the agent walk along the trail, would always succeed, except that the trails sometimes contain rocks (gray), which must be climbed; the policy thus fails when a rock is encountered.
    This policy is ``close'' to satisficing (see Definition~\ref{defn:costtogo} for a precise meaning), so we would like to give it a good score.
    Each of the next three columns illustrates a different score function evaluated using two training problems (top and bottom rows).
    Squares (circles) denote agreement (disagreement) between the candidate policy and plan.
    (Middle) Policy evaluation gives a trivial score to the candidate policy because it never reaches the goal. (Middle right) Plan comparison gives a poor score because the policy differs substantially from the paths found by a planner, which often leave the marked trail. (Right) We propose \Ours{}, which evaluates the candidate by policy-guided planning, resulting in a good score.}
  \label{fig:teaser}
\end{figure*}

\Ours{} combines ideas from two score functions for GPS: \emph{plan comparison}, which plans on the training tasks and records the agreement between the found plans and the candidate policy; and \emph{policy evaluation}, which executes the policy on the training tasks and records the number of successes.
While plan comparison provides a dense scoring over policy space, it works best when the solutions for a set of problems all conform to a simple policy.
Absent this property, GPS can overfit to complicated and non-general solutions.
Policy evaluation scores are extremely sparse within the policy space, effectively forcing an exhaustive search until reaching a region of the policy space with non-zero scores. 
However, the policies resulting from this approach are significantly more compact and general.
\Ours{} combines the strengths of plan comparison and policy evaluation to overcome their individual limitations.
See Figure~\ref{fig:teaser} for an illustration.

In experiments, we study a simple but expressive class of \emph{lifted decision list} generalized policies for PDDL domains  \cite{mooney1995induction,khardon1999learning,levine2003learning}.
Since the policies are lifted, that is, invariant over object identities, they can generalize to problems involving new and more objects than seen during learning.
We propose several \emph{search operators} for lifted decision list policies, including a ``bottom-up'' operator that proposes policy changes based on prior planning experience.

This paper makes the following main contributions: (1) we propose \Ours{} as a new approach to GPS; (2) we provide conditions under which \Ours{} is optimal or admissible for policy search; (3) we propose a specific instantiation of GPS for PDDL domains; and (4) we report empirical results across six PDDL domains, demonstrating that \Ours{} efficiently learns policies that generalize to held-out test problems.
\section{Related Work}
\label{sec:related}

Policy search has been previously considered as an approach to generalized planning~\cite{levine2003learning,khardon1999learning,jimenez2015computing,segovia2018computing}.
Most prior work either uses \emph{policy evaluation} as a score function, or relies on classical planning heuristics for the search through policy space.
One exception is Segovia-Aguas et al.~\shortcite{segovia2021generalized}, who extend policy evaluation with goal counting; we include this baseline in our experiments.
Because of the limited search guidance and size of the policy space, all of these methods can typically learn only small policies, or require additional specifications of the problems that the policy is expected to solve ~\cite{Illanes_McIlraith_2019}. 

Another strategy for generalized planning is to construct a policy, often represented as a finite state machine, from example plans~\cite{kplanner,fsa_planner,Srivastava2011DirectedSF,winner2008dsplanner}. 
Such approaches are successful if the plans all conform to a single compact policy. 
This assumption is violated in many interesting problems including ones that we investigate here (c.f. \emph{plan comparison}).
Other approaches have been proposed to learn \emph{abstractions} from example plans that lend themselves to compact policies ~\cite{genplan_representation,learning_abstractions_3}. While these abstraction-based approaches expand the space of problems that can be solved, they are still unable to learn policies that deviate dramatically from the example plans.

Recent work has proposed using deep learning to learn a generalized policy \cite{groshev2017learning} or a heuristic function \cite{rivlin2020generalized,karia2020learning}.
Inspired by these efforts, we include a graph neural network baseline in experiments.
Beyond classical planning in factored and relational domains, learning a goal-conditioned policy is of interest in reinforcement learning and continuous control~\cite{gcrl2,gcrl3}.
Unlike in generalized planning, these settings typically do not assume a known, structured domain model, and often make very different assumptions about state and action representations.
The extent to which our insights in this work can be applied in these settings is an interesting area for future investigation.

We propose policy-guided planning as a mechanism for policy \emph{learning}.
Previous work has also considered the question of how to use a \emph{fixed} (often previously learned) policy to aid planning \cite{khardon1999learning,yoon2008learning}.

\section{Preliminaries}
We begin with a brief review of classical planning and then define the generalized problem setting.

\subsection{Classical Planning}

In classical planning, we are given a \emph{domain} and a \emph{problem}, both often expressed in PDDL \cite{pddl}.
We use the STRIPS subset with types and negative preconditions.

A domain is a tuple $\langle P, A \rangle$ where $P$ is a set of \emph{predicates} and $A$ is a set of \emph{actions}.
A predicate $p \in P$ consists of a name and an arity. An \emph{atom} is a predicate and a tuple of \emph{terms}, which are either \emph{objects} or \emph{variables}.
For example, $\texttt{on}$ is a predicate with arity 2; $\texttt{on(X, Y)}$ and $\texttt{on(b1, b2)}$ are atoms, where $\texttt{X}, \texttt{Y}$ are variables and $\texttt{b1}, \texttt{b2}$ are objects.
Terms may be typed.
A \emph{literal} is an atom or its negation.
An action $a \in A$ consists of a name and a tuple $\langle \textsc{Par}(a), \textsc{Pre}(a), \textsc{Add}(a), \textsc{Del}(a) \rangle$, which are the parameters, preconditions, add effects, and delete effects of the action respectively.
The parameters are a tuple of variables.
The preconditions are a set of literals, and the add and delete effects are sets of atoms over the parameters.
A \emph{ground action} is an action with objects substituted for parameters.
For example, if $\texttt{move}$ is an action with parameters $\texttt{(?from, ?to)}$, then $\texttt{move(l5, l6)}$ is a ground action.
In general, actions may be associated with variable costs; in this work, we focus on \emph{satisficing} planning and assume all costs are unitary.

A domain is associated with a set of problems.
A problem is a tuple $\langle O, I, G \rangle$ where $O$ is a set of objects, $I$ is an \emph{initial state}, and $G$ is a \emph{goal}.
States and goals are sets of atoms with predicates from $P$ instantiated with objects from $O$.
Given a state $S$, a ground action $\underline{a} = a(o_1, \dots, o_k)$ with $a \in A$ and $o_i \in O$ is \emph{applicable} if $\textsc{Pre}(\underline{a}) \subseteq S$.\footnote{That is, if all positive atoms in $\textsc{Pre}(\underline{a})$ are in $S$, and no negated atoms in $\textsc{Pre}(\underline{a})$ are in $S$. We use this shorthand throughout.}
Executing an applicable action $\underline{a}$ in a state $S$ results in a \emph{successor state}, denoted $\underline{a}(S) = (S \setminus \textsc{Del}(\underline{a})) \cup \textsc{Add}(\underline{a})$.
Given a problem and domain, the objective is to find a \emph{plan} $(\underline{a}_1, \dots, \underline{a}_m)$ that \emph{solves} the problem, that is, $\underline{a}_m \circ \cdots \circ \underline{a}_1(I) \subseteq G$ and each $\underline{a}_i$ is applicable when executed.

\subsection{Problem Setting: Generalized Planning}
\label{sec:setting}

Classical planning is traditionally concerned with solving individual planning problems.
In \emph{generalized planning}, the objective instead is to find a unified solution to multiple problems from the same domain.
Here we are specifically interested in learning a \emph{generalized policy} $\pi$, which maps a state $S$ and a goal $G$ to a ground action $\underline{a}$, denoted $\pi(S, G) = \underline{a}$.
This form of $\pi$ is very general, but our intention is to learn a \emph{reactive} policy, which produces an action with minimal computation, and does not, for example, plan internally.
Given a problem $\langle O, I, G \rangle$, the policy is said to \emph{solve} the problem if there exists a plan $(\underline{a}_1, \dots, \underline{a}_m)$ that solves the problem, and such that $\pi(S_i, G) = \underline{a}_i$ for each state $S_i$ in the sequence of successors.
In practice, we evaluate a policy for a maximum number of steps to determine if it solves a problem.

Our aim is to learn a policy that generalizes to many problems from the same domain, including problems that were not available during learning.
We therefore consider a problem setting with a \emph{training} phase and a \emph{test} phase.
During training, we have access to the domain $\langle P, A \rangle$ and a set of \emph{training problems} $\Psi = \{\langle O_1, I_1, G_1 \rangle, \dots, \langle O_n, I_n, G_n \rangle \}$.
The output of training is a single policy $\pi$.
During the test phase, $\pi$ is evaluated on a set of held-out \emph{test problems}, often containing many more objects than those seen during training.
The objective of generalized policy learning is to produce a policy $\pi$ that solves as many test problems as possible.
A policy that solves all problems in the test set is referred to as \emph{satisficing}.

\section{Policy-Guided Planning for Policy Search}

In this work, we build on \emph{generalized policy search} (GPS), where a search through a set of generalized policies is guided by a \emph{score function}.
Specifically, we perform a greedy best-first search (GBFS), exploring policies in the order determined by  $\textsc{Score}$, which takes in a candidate policy $\pi$, the domain $\langle P, A \rangle$, and the training problems $\Psi$, and returns a scalar value, where lower is better (Algorithm~\ref{alg:pg3}).

\emph{Example 1 (Forest).} Forest is a PDDL domain illustrated in Figure~\ref{fig:teaser}; see the caption for a description.
There is a simple satisficing policy for this domain: walk along the trail, and climb over any rocks that are encountered.
This policy is similar in spirit to the classic wall-following policy for mazes.
The good-but-imperfect candidate policy illustrated in Figure~\ref{fig:teaser} walks along the trail, but does not climb.
When this candidate is considered during GPS, it will spawn many successor policies: one will add \texttt{(isRock ?to)} as a precondition to \texttt{rule1}; another will create a rule with the \texttt{climb} action; among others.
Each successor will then be scored.

The score function has a profound impact on the efficacy of GPS.
One possible score function is \textbf{policy evaluation}: the candidate policy is executed in each training problem, and the score is inversely proportional to the number of problems solved.
A major limitation of this score function is that its outputs are trivial for all policies that do not completely solve any training problems, such as the policy described above.

Another possible score function is \textbf{plan comparison}: a planner is used to generate plans for the training problems, and the candidate policy is scored according to the agreement between the plans and the candidate policy (Algorithm~\ref{alg:plancompare}).
When there are multiple ways to solve a problem, this score function can sharply mislead GPS.
For example, plan comparison gives a poor score to the follow-and-climb policy in Example 1, even though the policy is satisficing!
This issue is not limited to the Forest domain; similar issues arise whenever goal atoms can be achieved in different orders, or when the same state can be reached through two different paths from the initial state.
This phenomenon also arises in ``bottom-up''  generalized planning approaches (Section~\ref{sec:related}).

% Policy evaluation gives a trivial score to this good-but-imperfect candidate policy, making it indistinguishable from an entirely flawed policy.
% The weakness of plan comparison is more subtle, but just as pernicious: since the planner is oblivious to the candidate policy, it finds plans that take the agent off the marked trail, thereby punishing the candidate policy for following the trail.
% In fact, plan comparison would give a poor score to the follow-and-climb policy described above, even though the policy is satisficing!
% This issue of mismatch between the planner and the candidate policy is not limited to the Forest example.
% For instance, in any problem where goal atoms can be achieved in different orders, or where the same state can be reached through two different paths from the initial state, plan comparison is liable to mislead GPS.
% This phenomenon can also arise in ``bottom-up''  generalized planning approaches that are not explicitly cast as GPS, where a set of example plans is used to synthesize a generalized policy ~\cite{kplanner,fsa_planner,Srivastava2011DirectedSF}.

%When this candidate is considered during GPS, it will spawn many successor policies: one will add \texttt{(isRock ?to)} as a precondition to \texttt{rule1}; another will create a rule with the \texttt{climb} action; among others.

%This example illustrates the pitfalls of policy evaluation and plan comparison as score functions for GPS.

Our main contribution is \textbf{Policy-Guided Planning for Generalized Policy Generation (\Ours{})}.
Given a candidate policy, \Ours{} runs \emph{policy-guided planning} on each training problem (Algorithm~\ref{alg:policyguidedplanning}).
Our implementation of policy-guided planning is a small modification to $\text{A}^*$ search: for each search node that is popped from the queue, in addition to expanding the standard single-step successors, we roll out the policy for several time steps (maximum 50 in experiments\footnote{We did not exhaustively sweep this hyperparameter, but found in preliminary experiments that 50 was better than 10.}, or until the policy is not applicable), creating search nodes for each state encountered in the process.
The nodes created by policy execution are given cost 0, in contrast to the costs of the single-step successors, which come from the domain.
These costs encourage the search to prefer paths generated by the policy.

For each training problem, if policy-guided planning returns a plan, we run \emph{single plan comparison} (Algorithm~\ref{alg:plancompare}) to get a score for the training problem.
For each state and action in the plan, the policy is evaluated at that state and compared to the action; if they are not equal, the score is increased by 1.
If a plan was not found, the maximum possible plan length $\ell$ is added to the overall score.
Finally, the per-plan scores are accumulated to arrive at an overall policy score.
To accumulate, we use \emph{max} for its theoretical guarantees (Section \ref{sec:theory}), but found \emph{mean} to achieve similar empirical performance.

Intuitively, if a candidate policy is able to solve a problem except for a few ``gaps'', policy-guided planning will rely heavily on the policy's suggestions, and the policy will obtain a high score from \Ours{}.
If the candidate policy is poor, policy-guided planning will ignore its suggestions, resulting in a low score.
For example, consider again the good-but-imperfect policy for the Forest domain from Example 1.
Policy-guided planning will take the suggestions of this policy to follow the marked trail until a rock is encountered, at which point the policy becomes stuck.
The planner will then rely on the single-step successors to climb over the rock, at which point it will again follow the policy, continuing along the trail until another rock is encountered or the goal is reached.

% To understand \Ours{} more generally, we next examine the relationship between the \Ours{} score function and alternative score functions for generalized policy search.

\begin{algorithm}[t]
  \SetAlgoNoEnd
  \DontPrintSemicolon
  \SetKwFunction{algo}{algo}\SetKwFunction{proc}{proc}
  \SetKwProg{myalg}{}{}{}
  \SetKwProg{myproc}{Subroutine}{}{}
  \SetKw{Continue}{continue}
  \SetKw{Break}{break}
  \SetKw{Return}{return}
  \myalg{\textsc{Generalized Policy Search via GBFS}}{
    \nonl \textbf{input:} domain $\langle P, A \rangle$ and training problems $\Psi$\;
    \nonl \textbf{input:} search operators $\Omega$\;
    \tcp{\footnotesize E.g., an empty decision list}
    \nonl \textbf{initialize:} trivial generalized policy $\pi_0$\;
    \tcp{\footnotesize Ordered low-to-high priority}
    \nonl \textbf{initialize:} empty priority queue $q$\;
    \tcp{\footnotesize See Algorithm~\ref{alg:pg3score}}
    \nonl Push $\pi_0$ onto $q$ with priority $\textsc{Score}(\pi_0, P, A, \Psi)$\;
    \tcp{\footnotesize Repeat until max iters}
    \For{$i = 1, 2, 3, ...$} {
    \nonl Pop $\pi$ from $q$
    \tcp*{\footnotesize Best policy in $q$}
    \nonl \For{search operator $\omega \in \Omega$}{
    \nonl \For{$\pi' \in \omega(\pi, P, A, \Psi)$} {
    \nonl Push $\pi'$ to $q$ with $\textsc{Score}(\pi', P, A, \Psi)$\;
    }
    }
    }
    \tcp{\footnotesize Policy with the lowest score}
    \Return Best seen policy $\pi^*$\;
    }
\caption{\small{Generalized policy search via GBFS. See Algorithm~\ref{alg:pg3score} for the \Ours{}-specific implementation of \textsc{Score}.}}
\label{alg:pg3}
\end{algorithm}

\begin{algorithm}[t]
  \SetAlgoNoEnd
  \DontPrintSemicolon
  \SetKwFunction{algo}{algo}\SetKwFunction{proc}{proc}
  \SetKwProg{myalg}{}{}{}
  \SetKwProg{myproc}{Subroutine}{}{}
  \SetKw{Continue}{continue}
  \SetKw{Break}{break}
  \SetKw{Return}{return}
  \myalg{\textsc{\Ours{} Score Function for Policy Search}}{
    \nonl \textbf{input:} candidate policy $\pi$\;
    \nonl \textbf{input:} domain $\langle P, A \rangle$ and training problems $\Psi$\;
    \nonl \textbf{hyperparameter:} max plan horizon $\ell$\;
    \nonl \textbf{initialize:} $scores \gets []$\;
    \nonl \For{$\langle O, I, G \rangle \in \Psi$} {
    \tcp{\footnotesize See Algorithm \ref{alg:policyguidedplanning}}
    \nonl Run $\textsc{PolicyGuidedPlan}(\pi, O, I, G, P, A)$\;
    \If {a plan $p$ is found}{
    \tcp{\footnotesize See Algorithm~\ref{alg:plancompare}} $score \gets \textsc{PlanComparison}(\pi, p, I)$\;
    $scores.append(score)$\;}
    \Else {
    $scores.append(\ell)$}
    }
    \Return $max(scores)$\;
    }
\caption{\small{Scoring via policy-guided planning} (\Ours{}).}
\label{alg:pg3score}
\end{algorithm}

\begin{algorithm}[t]
  \SetAlgoNoEnd
  \DontPrintSemicolon
  \SetKwFunction{algo}{algo}\SetKwFunction{proc}{proc}
  \SetKwProg{myalg}{}{}{}
  \SetKwProg{myproc}{Subroutine}{}{}
  \SetKw{Continue}{continue}
  \SetKw{Break}{break}
  \SetKw{Return}{return}
  \myalg{\textsc{Policy-Guided Planning}}{
    \nonl \textbf{input:} policy $\pi$\;
    \nonl \textbf{input:} problem $\langle O, I, G \rangle$ and domain $\langle P, A \rangle$\;
    \nonl \textbf{hyperparameter:} max plan horizon $\ell$\;
    \tcp{\footnotesize See definition below}
    \nonl \Return $\text{A}^*$ with \textsc{GetSuccessors}\;
    }
%   \tcp{\footnotesize Successor function for $\text{A}^*$ search}
  \myproc{\textsc{GetSuccessors}}{
    \nonl \textbf{input:} state $S$
    \tcp*{\footnotesize State in search}
    \nonl \textbf{hyperparameter:} max policy execution steps $k$\;
    \nonl \For{ground actions $\underline{a}$ applicable in $S$} {
    \tcp{\footnotesize Standard successor function}
    \nonl \textbf{yield} $\underline{a}(S)$ with cost from domain
    }
    \tcp{\footnotesize Policy-guided successors}
    \nonl \For{$i=1, 2, \dots, k$} {
    \nonl if $\pi$ is not applicable in $S$, \textbf{stop}.\;
    \tcp{\footnotesize Get action from policy}
    \nonl $\underline{a}_i \gets \pi(S, G)$\;
    \tcp{\footnotesize Update current state}
    \nonl $S \gets \underline{a}_i(S)$\;
    \tcp{\footnotesize Yield all encountered states}
    \nonl \textbf{yield} $S$ with cost 0\;
    }
  }
\caption{\small{Policy-guided planning.} A helper for the \Ours{} score function (Algorithm~\ref{alg:pg3score}). In our experiments with PDDL domains, $\text{A}^*$ uses the $\text{h}_{\text{Add}}$ heuristic~\protect\cite{bonet2001planning}.}
\label{alg:policyguidedplanning}
\end{algorithm}

\begin{algorithm}[t]
  \SetAlgoNoEnd
  \DontPrintSemicolon
  \SetKwFunction{algo}{algo}\SetKwFunction{proc}{proc}
  \SetKwProg{myalg}{}{}{}
  \SetKwProg{myproc}{Subroutine}{}{}
  \SetKw{Continue}{continue}
  \SetKw{Break}{break}
  \SetKw{Return}{return}
  \myalg{\textsc{Single Plan Comparison}}{
    \nonl \textbf{input:} policy $\pi$, plan $p$, initial state $I$\;
    \nonl \textbf{initialize:} $score$ to 0 and $S$ to $I$\;
    \For{$\underline{a}$ in $p$}{
    \If{$\pi(S) \neq \underline{a}$}{
        \nonl Add $1$ to $score$\;
    }
    \nonl $S \gets \underline{a}(S)$
    }
    \Return $score$\; 
    }
\caption{\small{Scoring a policy by comparison to a single plan.} A helper for the \Ours{} score function (Algorithm~\ref{alg:pg3score}).}
\label{alg:plancompare}
\end{algorithm}

\textbf{Limitation.}
\Ours{} requires planning during scoring, which can be computationally expensive.
%(For plan comparison, the plans can be computed once offline.)
Nonetheless, in experiments, we will see that \Ours{} (implemented in Python) can quickly learn policies that generalize to large test problems.

\subsection{Theoretical Results}
\label{sec:theory}
We now turn to a simplified setting for theoretical analysis.

\emph{Example 2 (Tabular).}
Suppose we were to represent policies as tables.
Let $\{(S, G) \mapsto \underline{a}\}$ denote a policy that assigns action $\underline{a}$ for state $S$ and goal $G$.
Consider a single search operator for GPS, which changes one entry of a policy's table in every possible way.
For example, in a domain with two states $S_1, S_2$, one goal $G$, and two actions $\underline{a}_1, \underline{a}_2$, the GPS successors of the policy $\{(S_1, G) \mapsto \underline{a_1}, (S_2, G) \mapsto \underline{a_1}\}$ would be $\{(S_1, G) \mapsto \underline{a_2}, (S_2, G) \mapsto \underline{a_1}\}$ and $\{(S_1, G) \mapsto \underline{a_1}, (S_2, G) \mapsto \underline{a_2}\}$.
Here we are not concerned with generalization; training and test problems are the same, $\Psi$.

Our main theoretical results concern the influence of the \Ours{} score function on GPS.
Proofs are included in Appendix~\ref{sec:proofs}.
We begin with a definition.

\begin{defn}[GPS cost-to-go]
\label{defn:costtogo}
The \emph{GPS cost-to-go} from a policy $\pi_0$ is the minimum $k$ s.t. there exists sequences of policies $\pi_0, \dots, \pi_{k}$ and search operators $\omega_0, \dots, \omega_{k-1}$ s.t. $\pi_{k}$ is satisficing, and $\forall j, \pi_{j+1} \in \omega_j(\pi_j)$.
\end{defn}

Note that in the tabular setting, the GPS cost-to-go is equal to the minimum number of entries in the policy table that need to be changed to create a policy that solves all problems.

% \begin{defn}[GPS admissible]
% \label{defn:admissible}
% A score function is \emph{GPS admissible} if it is a lower bound on the GPS cost-to-go.
% \end{defn}

% This definition coincides with the standard definition of admissibility in heuristic search.

\begin{assumption}
\label{assumption:allplanscomparison}
The heuristic used for $\text{A}^*$ search in policy-guided planning (Algorithm~\ref{alg:policyguidedplanning}) is admissible; planning is complete; and all costs in the original problem are unitary.\footnote{Unitary costs are assumed for simplicity; the proofs can be extended to handle any positive (nonzero) costs.}
\end{assumption}

\begin{restatable}{thm}{equalthm}
\label{thm:equal}
Under Assumption~\ref{assumption:allplanscomparison}, in the tabular setting (Example 2), if a policy $\pi$ solves all but one of the problems in $\Psi$, then the \Ours{} score for $\pi$ is equal to the GPS cost-to-go.
\end{restatable}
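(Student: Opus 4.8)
The plan is to first reduce the $\max$-accumulated \Ours{} score to the contribution of the single unsolved problem, and then to prove two matching inequalities relating that contribution to the GPS cost-to-go. Write $\psi^\star = \langle O^\star, I^\star, G^\star\rangle$ for the problem that $\pi$ fails to solve, and assume it is solvable (otherwise the cost-to-go is infinite and there is nothing to prove). For any problem that $\pi$ already solves, following the policy from the initial state is a cost-$0$ trajectory to the goal in the policy-guided search graph, since policy edges have cost $0$; so $\textsc{PolicyGuidedPlan}$ returns exactly that trajectory and $\textsc{PlanComparison}$ reports $0$ disagreements. Hence every solved problem contributes $0$, while $\pi$ fails on $\psi^\star$, so the $\max$ equals the score of $\psi^\star$ alone.

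The first key step is to identify that score with a purely combinatorial quantity. In the policy-guided $\text{A}^*$ search, taking the policy action $\pi(S,G^\star)$ out of a state $S$ is available at cost $0$, while any other applicable action costs $1$; consequently no optimal plan ever uses a cost-$1$ edge for an action that agrees with the policy, so along the returned plan the cost-$1$ edges are exactly the steps on which the plan disagrees with $\pi$. Thus the cost of the returned plan equals its $\textsc{PlanComparison}$ disagreement count, and by admissibility and completeness of $\text{A}^*$ (Assumption~\ref{assumption:allplanscomparison}) this is the minimum, over all plans solving $\psi^\star$, of the number of states at which the plan deviates from $\pi$. Call this minimum $d$; we have shown the \Ours{} score equals $d$, and it remains to show $d$ equals the GPS cost-to-go, using the tabular characterization of cost-to-go as the fewest table entries one must edit to obtain a satisficing policy.

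For the lower bound (cost-to-go $\ge d$), take any satisficing $\pi'$ obtained from $\pi$ by editing $k$ entries. Since $\pi'$ solves $\psi^\star$, its trajectory $q'$ from $I^\star$ is a plan solving $\psi^\star$, and every state at which $q'$ disagrees with $\pi$ must be an edited entry keyed by $G^\star$; hence $q'$ has at most $k$ disagreements, giving $d \le k$.

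The upper bound (cost-to-go $\le d$) is the main obstacle, because editing entries to repair $\psi^\star$ could in principle break one of the already-solved problems. Let $q^\star = (\underline{b}_1,\dots,\underline{b}_m)$ be a minimum-disagreement plan for $\psi^\star$ with states $T_0,\dots,T_m$; after deleting loops we may assume the $T_i$ are distinct. Define $\pi'$ to agree with $\pi$ everywhere except that $\pi'(T_i, G^\star) = \underline{b}_{i+1}$ at the $d$ states where they disagree, so $\pi'$ differs from $\pi$ in exactly $d$ entries and follows $q^\star$ to the goal on $\psi^\star$. The crucial observation is that the edits are all keyed by the goal $G^\star$, so any problem with a different goal is untouched; and for a solved problem whose goal is also $G^\star$, its $\pi'$-trajectory coincides with its $\pi$-trajectory until it first reaches some edited state $T_i$, from which point $\pi'$ deterministically follows $q^\star$ into $G^\star$, so it still reaches the goal. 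Thus $\pi'$ is satisficing and cost-to-go $\le d$, and combined with the lower bound and the score identity this yields \Ours{} score $= d = $ cost-to-go. The care needed to verify this ``funnel'' argument — in particular that no relevant trajectory can loop before being captured by $q^\star$, which follows from $\pi$ solving each such problem in finitely many steps — is where I expect the real work to lie.
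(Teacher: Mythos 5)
Your proof is correct and takes essentially the same route as the paper's: both identify the \Ours{} score with the minimum-cost (equivalently, minimum-disagreement) plan in the search graph augmented with zero-cost policy edges, then establish the two inequalities by (i) reading the trajectory of any satisficing edited policy as a plan whose cost is bounded by the number of edits, and (ii) constructing a tabular policy that edits exactly the disagreement entries along a minimum-disagreement plan. If anything, your write-up is more careful than the paper's, which asserts ``by construction'' the satisficing property of the edited policy that your goal-keyed funnel argument (together with the de-looping and the explicit reduction of the max to the single unsolved problem) actually verifies.
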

% \begin{proof}
% See Appendix~\ref{sec:proofs}.
% \end{proof}

As a corollary, GPS with \Ours{} will perform optimally under the conditions of Theorem~\ref{thm:equal}, in terms of the number of nodes expanded by GBFS before reaching a satisficing policy.

\begin{restatable}{thm}{admissiblethm}
\label{thm:costtogo}
Under Assumption~\ref{assumption:allplanscomparison}, in the tabular setting (Example 2), \Ours{} is a lower bound on the GPS cost-to-go.
\end{restatable}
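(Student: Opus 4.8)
The plan is to prove the lower bound directly by exhibiting, for each training problem, a feasible behaviour inside the policy-guided search graph whose policy-guided cost is at most the GPS cost-to-go, and then to push this through the chain of inequalities connecting search cost, the \textsc{PlanComparison} score, and the $\max$ accumulation in Algorithm~\ref{alg:pg3score}. Write $c$ for the GPS cost-to-go from $\pi$. If $c = \infty$ the statement is trivial, so assume $c$ is finite; by Definition~\ref{defn:costtogo} there is then a satisficing policy $\pi^*$ reachable in $c$ single-entry edits, and (as noted after the definition) a minimizing $\pi^*$ disagrees with $\pi$ on exactly $c$ table entries. Because \Ours{} accumulates per-problem scores with $\max$, it suffices to show that for every problem $\langle O, I, G\rangle \in \Psi$ the per-problem score is at most $c$.

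Fix such a problem. First I would use that $\pi^*$ is deterministic and satisficing to extract its induced trajectory $I = S_0, S_1, \dots, S_m$ with $S_{i+1} = \pi^*(S_i, G)(S_i)$ reaching $G$. Determinism forces the pre-goal states $S_0, \dots, S_{m-1}$ to be pairwise distinct (a repeat would produce a non-terminating cycle that never reaches $G$), so the map $i \mapsto (S_i, G)$ injects the set of indices $i$ at which $\pi(S_i, G) \neq \pi^*(S_i, G)$ into the set of table entries on which $\pi$ and $\pi^*$ differ. Hence the number of such \emph{disagreements} along the trajectory is at most $c$.

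Next I would realize this trajectory inside the \textsc{GetSuccessors} graph of Algorithm~\ref{alg:policyguidedplanning}: at step $i$, if $\pi$ agrees with $\pi^*$ at $S_i$ then the length-one policy rollout already moves $S_i$ to $S_{i+1}$ at cost $0$; otherwise take the ordinary single-step successor with the applicable action $\pi^*(S_i,G)$ at cost $1$. This yields a goal-reaching path whose policy-guided cost equals the number of disagreements, which is $\le c$. By Assumption~\ref{assumption:allplanscomparison} (completeness and an admissible heuristic), $\text{A}^*$ returns a cost-\emph{optimal} plan $p$, so $\mathrm{cost}(p) \le c$; in particular a plan is found, so the $\ell$-branch of Algorithm~\ref{alg:pg3score} is never taken. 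It then remains to bound $\textsc{PlanComparison}(\pi, p, I)$ by $\mathrm{cost}(p)$: every action contributed to $p$ by a policy rollout equals $\pi(S,G)$ and thus adds $0$ to both the cost and the comparison score, while each single-step action adds $1$ to the cost and at most $1$ to the comparison score. Therefore the per-problem score is $\le \mathrm{cost}(p) \le c$, and taking $\max$ over $\Psi$ gives the claim.

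The main obstacle I anticipate is the final bookkeeping step: correctly reconstructing the action sequence $p$ from a search path that interleaves cost-$0$ policy rollouts (which may span several steps, and must be chained through intermediate search nodes when a run of agreement exceeds the rollout limit $k$) with cost-$1$ single-step edges, and verifying transition-by-transition that \textsc{PlanComparison} never exceeds the policy-guided cost. The genuine conceptual content is localized in the distinctness argument for the witness trajectory, which is what converts ``disagreements along a path'' into ``edits to the table'' and thereby into the cost-to-go; everything downstream is a monotone chain of inequalities enabled by admissibility and completeness.
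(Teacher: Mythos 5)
Your proof is correct, but it takes a genuinely different route from the paper's. The paper proves Theorem~\ref{thm:costtogo} by contradiction in three quick steps: since the \Ours{} score is a \emph{max} over per-problem scores, it picks the argmax problem $\langle O, I, G \rangle$, invokes Theorem~\ref{thm:equal} on the singleton set $\{\langle O, I, G \rangle\}$ to conclude that the per-problem score equals the cost-to-go for that singleton, and then appeals to monotonicity of the GPS cost-to-go under enlarging the problem set (each additional problem contributes non-negatively in the tabular setting). You instead give a direct, self-contained argument: take a nearest satisficing policy $\pi^*$ at Hamming distance $c$, realize its trajectory on \emph{each} problem as a path in the policy-guided search graph (cost-$0$ rollout edges at agreement steps, cost-$1$ single-step edges at disagreements), use distinctness of pre-goal states under a deterministic policy to inject trajectory disagreements into table-entry differences, and then chain $\textsc{PlanComparison}(\pi,p,I) \le \mathrm{cost}(p) \le c$ via $\text{A}^*$ optimality. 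In effect you have extracted and reused only the one-sided half of the machinery inside the paper's proof of Theorem~\ref{thm:equal} (where the same modified-search-graph construction appears), which buys you several things: you never need the ``solves all but one problem'' hypothesis of Theorem~\ref{thm:equal}, you avoid the slightly delicate singleton application of that theorem (including the edge case where $\pi$ already solves the argmax problem), you do not need the exact-equality result at all, and you make explicit two details the paper leaves implicit --- the distinctness argument that converts path disagreements into edits, and the bookkeeping for rollout segments (including chaining through intermediate nodes when agreement runs exceed the rollout limit $k$) together with the observation that a cost-$1$ edge can only \emph{over}count relative to the comparison score, so the inequality runs the right way. The paper's route is shorter once Theorem~\ref{thm:equal} is in hand; yours is more elementary and more robust as a standalone admissibility proof.
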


These results do not hold for other choices of score functions, including policy evaluation or plan comparison.
However, the results do not immediately extend beyond the tabular setting; in general, a single application of a search operator could change a policy's output on every state, converting a ``completely incorrect'' policy into a satisficing one.
In practice, we expect GPS to lie between these two extremes, with search operators often changing a policy on more than one, but far fewer than all, states in a domain.
Toward further understanding the practical case, we next consider a specific instantiation of GPS that will allow us to study \Ours{} at scale.

\subsection{Generalized Policies as Lifted Decision Lists}

We now describe a hypothesis class of \emph{lifted decision list} generalized policies that are well-suited for PDDL domains \cite{mooney1995induction,levine2003learning}.
\begin{defn}[Rule]
\label{def:rule}
A \emph{rule} $\rho$ for a domain $\langle P, A \rangle$ is a tuple:
\begin{itemize}
    \item $\textsc{Par}(\rho)$: \emph{parameters}, a tuple of variables;
    \item $\textsc{Pre}(\rho)$: \emph{preconditions}, a set of literals;
    \item $\textsc{Goal}(\rho)$: \emph{goal preconditions}, a set of literals;
    \item $\textsc{Act}(\rho)$: the rule's \emph{action}, from $A$;
\end{itemize}
with all literals and actions instantiated over the parameters, and with all predicates in $P$.
\end{defn}
As with actions, rules can be \emph{grounded} by substituting objects for parameters, denoted $\underline{\rho} = \rho(o_1, \dots, o_k)$.
Intuitively, a rule represents an existentially quantified conditional statement: if there exists some substitution for which the rule's preconditions hold, then the action should be executed.
We formalize these notions with the following two definitions.
\begin{defn}[Rule applicability]
\label{def:ruleapplicable}
Given a state $S$ and goal $G$ over objects $\O$, a rule $\rho$ is \emph{applicable} if $\exists (o_1, \dots, o_k)$ s.t. $\textsc{Pre}(\underline{\rho}) \subseteq S$ and $\textsc{Goal}(\underline{\rho}) \subseteq G$, where $\underline{\rho} = \rho(o_1, \dots, o_k)$ and with each $o_i \in \O$.
\end{defn}
\begin{defn}[Rule execution]
\label{def:ruleexecution}
Given a state $S$ and goal $G$ where rule $\rho$ is applicable, let $(o_1, \dots, o_k)$ be the first\footnote{First under a fixed, arbitrary ordering of object tuples, e.g., lexicographic, to avoid nondeterminism.} tuple of objects s.t. the conditions of Definition~\ref{def:ruleapplicable} hold with $\underline{\rho} = \rho(o_1, \dots, o_k)$. Then the \emph{execution} of $\rho$ in $(S, G)$ is $\underline{a} = \textsc{Act}(\underline{\rho})$, denoted $\rho(S, G) = \underline{a}$.
\end{defn}
Rules are the building blocks for our main generalized policy representation, the lifted decision list.
\begin{defn}[Lifted decision list policy]
\label{def:ldl}
A \emph{lifted decision list policy} $\pi$ is an (ordered) list of rules $[\rho_1, \rho_2, \dots, \rho_\ell]$.
Given a state $S$ and goal $G$, $\pi$ is \emph{applicable} if $\exists i$ s.t. $\rho_i$ is applicable.
If $\pi$ is applicable, the \emph{execution} of $\pi$ is the execution of the first applicable rule, denoted $\pi(S, G) = \underline{a}$.
\end{defn}

See Appendix~\ref{sec:learnedpolicies} for examples of lifted decision list policies represented using PDDL-like syntax.
Compared to PDDL operators, in addition to the lack of effects and the addition of goal preconditions, it is important to emphasize that the rules are \emph{ordered}, with later rules only used when previous ones are not applicable.
Also note these policies are \emph{partial}: they are only defined when they are applicable.

\textbf{Representational Capacity.} We selected lifted decision lists because they are able to compactly express a rich set of policies across several domains of interest.
For example, repeated application of a lifted decision list policy can lead to looping behavior.
Nonetheless, note the absence of numeric values, transitive closures, universal quantifiers, and more sophisticated control flow and memory~\cite{jimenez2015computing,bonet2019learning}.
\Ours{} is not specific to lifted decision lists and could be used with richer policy classes.

\begin{table*}
	\centering
	\footnotesize
	\begin{tabular}{| l | p{0.7cm} | p{0.7cm} | p{0.7cm} | p{0.7cm} | p{0.7cm} | p{0.7cm} | p{0.7cm} | p{0.7cm} | p{0.7cm} | p{0.7cm} | p{0.7cm} | p{0.7cm} | p{0.7cm} | p{0.7cm} | }
	\hline
	\multicolumn{1}{|c|}{} &\multicolumn{2}{c|}{PG3 (Ours)} &
	\multicolumn{2}{c|}{Policy Eval} &
	\multicolumn{2}{c|}{Plan Comp} &
	\multicolumn{2}{c|}{Combo} &
	\multicolumn{2}{c|}{Goal Count} &
	\multicolumn{2}{c|}{GNN BC} &
	\multicolumn{2}{c|}{Random} \\
	\hline
	Domains &
	\;{\scriptsize Eval} & \:{\scriptsize Time} &
	\;{\scriptsize Eval} & \:{\scriptsize Time} &
	\;{\scriptsize Eval} & \:{\scriptsize Time} &
	\;{\scriptsize Eval} & \:{\scriptsize Time} &
	\;{\scriptsize Eval} & \:{\scriptsize Time} &
	\;{\scriptsize Eval} & \:{\scriptsize Time} &
	\;{\scriptsize Eval} & \:{\scriptsize Time} \\
	\hline
	Delivery & \bf{1.00} & 1.5 & 0.00 & \;\;\;-- & 0.10 & 5.8 & 0.20 & 2132.4 & \bf{1.00} & 27.9 & 0.40 & 39.9 & 0.00 & \;\;\;-- \\
	Forest & \bf{1.00} & 107.2 & \bf{1.00} & 605.1 & 0.16 & 51.5 & \bf{1.00} & 815.4 & 0.89 & 662.4 & 0.13 & \;\;\;-- & 0.03 & \;\;\;-- \\
	Gripper & \bf{1.00} & 79.5 & 0.00 & \;\;\;-- & 0.20 & 0.8 & 0.00 & \;\;\;-- & 0.00 & \;\;\;-- & 0.06 & \;\;\;-- & 0.00 & \;\;\;-- \\
	Miconic & \bf{1.00} & 434.8 & 0.00 & \;\;\;-- & 0.10 & 316.6 & 0.00 & \;\;\;-- & 0.90 & 2415.3 & 0.12 & \;\;\;-- & 0.13 & \;\;\;-- \\
	Ferry & \bf{1.00} & 16.1 & 0.00 & \;\;\;-- & 0.90 & 1.8 & 0.00 & \;\;\;-- & \bf{1.00} & 5705.2 & 0.11 & 11.8 & 0.00 & \;\;\;-- \\
	Spanner & \bf{1.00} & 17.7 & \bf{1.00} & 6.8 & 0.56 & 2.1 & \bf{1.00} & 31.2 & \bf{1.00} & 5.2 & 0.37 & 18.1 & 0.06 & \;\;\;-- \\\hline
	\end{tabular}
	\caption{\textbf{Policy learning results}. Eval columns report the fraction of test problems solved by the final learned policy. Time columns report the average wall-clock time (in seconds) required to learn a policy that solves $\ge 90\%$ of test problems, with a missing entry if such a policy was never found.
	All entries are means across 10 random seeds and 30 test problems per seed, with standard deviations shown in Table~\ref{tab:mainresultsstd}. }
	\label{tab:mainresults}
\end{table*}

\subsection{Generalized Policy Search Operators}

Here we present the search operators that we use for GPS with lifted decision list policies (Algorithm~\ref{alg:pg3}).
Recall each operator $\omega \in \Omega$ is a function from a policy $\pi$, a domain $\langle P, A \rangle$, and training problems $\Psi$, to a set of \emph{successor policies} $\Pi'$.

\textbf{Add Rule.} This operator adds a new rule to the given policy.
One successor policy is proposed for each possible new rule and each possible position in the decision list.
Each new rule $\rho$ corresponds to an action with preconditions from the domain, and no goal conditions: for $a \in A$, $\rho = \langle \textsc{Par}(a), \textsc{Pre}(a), \emptyset, a \rangle$.
The branching factor for this operator is therefore $|A|(|\pi|+1)$, where $|\pi|$ denotes the number of rules in the given decision list.

\textbf{Delete Rule.} This operator deletes a rule from the given policy. The branching factor for this operator is thus $|\pi|$.

% \textbf{Move Rule.} This operator swaps the order of two consecutive rules. The branching factor is therefore $|\pi|-1$.

\textbf{Add Condition.} For each rule $\rho$ in the given policy, this operator adds a literal to $\textsc{Pre}(\rho)$ or $\textsc{Goal}(\rho)$.
The literal may be positive or negative and the predicate may be any in $P$.
The terms can be any of the variables that are already in $\rho$.
The branching factor for this operator is thus $O(4|\pi||P|k^m)$, where $m$ is the maximum arity of predicates in $P$ and $k$ is the number of variables in $\rho$.
%\footnote{As a minor optimization, a literal is not added if either it or its negation is already in the rule.}

\textbf{Delete Condition.} For each rule $\rho$ in the given policy, this operator deletes a literal from $\textsc{Pre}(\rho)$ or $\textsc{Goal}(\rho)$.
Literals in the action preconditions are never deleted. The branching factor is therefore at most $\sum_{\rho \in \pi} |\textsc{Pre}(\rho)| + |\textsc{Goal}(\rho)|$.

\textbf{Induce Rule from Plans.} This final operator is the most involved; we describe it here at a high level and give details in Appendix \ref{sec:triangletablesopt}.
Unlike the others, this operator uses plans generated on the training problems to propose policy modifications.
In particular, the operator identifies a state-action pair that disagrees with the candidate policy, and amends the policy so that it agrees.
The mechanism for amending the policy is based on an extension of the \emph{triangle tables} approach of \cite{triangletables}.
This operator proposes a single change to each candidate, so the branching factor is 1.

Given these operators, we perform GPS starting with an empty lifted decision list using the operator order: Induce Rule from Plans; Add Condition; Delete Condition; Delete Rule; Add Rule.
In Appendix~\ref{sec:policysearchopt}, we describe two optimizations to improve GPS, irrespective of score function.

%Though the operators are \emph{complete}, in that any lifted decision list policy (up to variable renaming) will eventually be reached, the tractability of GPS requires empirical verification.

\section{Experiments and Results}
The following experiments evaluate the extent to which \Ours{} can learn policies that generalize to held-out test problems that feature many more objects than seen during training.

% \begin{figure*}[t]
% \includegraphics[width=\textwidth]{figures/main_results.png}
% \caption{\todo{THESE ARE OLD PLOTS, NEED TO UPDATE} Learning results for all domains, showing the fraction of test problems solved versus GPS nodes expanded. Lines are averages and shaded areas are standard errors over 10 random seeds. GNN-BC and Random Actions are not GPS-based, so only dashed lines are shown. For GNN-BC, we show the performance of the final model (see Appendix~\ref{sec:gnndetails}).}
% \label{fig:mainresults}
% \end{figure*}

\textbf{Experimental Setup.}
Our main experiments consider the fraction of test problems solved by learned policies.
Note that here we are evaluating whether policies are capable of solving the problems on their own; we are not using the policies as planning guidance.
Each policy is executed on each test problem until either the goal is reached (success); the policy is not applicable in the current state (failure); or a maximum horizon is exceeded (failure) (Appendix~\ref{sec:domaindetails}).
All experimental results are over 10 random seeds, where training and test problem instances are randomly generated for each seed.
All GPS methods are run for 2500 node expansions.

% For the GPS-based methods, we are especially interested in the fraction of test problems solved as a function of the maximum number of nodes expanded (Algorithm~\ref{alg:pg3}).
% Node expansions offer a hardware-independent proxy for the efficiency of each method.
% All experimental results are over 10 random seeds.
% For the GPS methods, the randomization serves to break ties in planning and GPS.

\textbf{Domains.} % \subsection{Domains}
We use the following domains:
\begin{tightlist}
\item \textbf{Forest}: See Figure \ref{fig:teaser} for description.
\item \textbf{Delivery}: An agent must deliver packages to multiple locations while avoiding trap locations. The agent can pick up any number of packages from a home base, move between locations, and deliver a package to a location.
\item \textbf{Miconic}: In this International Planning Competition (IPC) domain, an agent can move an elevator up or down and board or depart passengers. Each problem requires the agent to transport passengers to their desired floors.
\item \textbf{Gripper}: In this IPC domain, an agent with grippers can pick, move, and drop balls. Each problem requires the agent to transport balls to their target locations.
\item \textbf{Ferry}: In this IPC domain, an agent can sail, board, and debark cars. Each problem requires the agent to board cars, sail to their destination, and debark them.
\item \textbf{Spanner}: In this IPC domain, an agent can move, pick up spanners, and tighten nuts using spanners. Each problem requires the agent to move along the corridor and pick up spanners needed to tighten nuts at the shed at the end of the corridor. The agent cannot move backwards.
\end{tightlist}
See Appendix~\ref{sec:domaindetails} for problem counts, sizes, and more details.

\subsection{Approaches}
We evaluate the following methods and baselines:
\begin{tightlist}
\item \textbf{\Ours{} (Ours)}: Our main approach.
\item \textbf{Policy evaluation}: GPS with policy evaluation.
\item \textbf{Plan comparison}: GPS with plan comparison. Plans are collected on the training problems using $\text{A}^*$ with $\text{h}_{\text{Add}}$.
\item \textbf{Combo}: GPS with policy evaluation, but with ties broken using the plan comparison score function.
\item \textbf{Goal count}: GPS with a score function that runs the candidate policy on each training problem and counts the number of goal atoms not yet achieved at the end of execution. Based on prior work~\cite{segovia2021generalized}.
\item \textbf{Graph neural network behavior cloning (GNN BC)}: Inspired by recent works that use graph neural networks (GNNs) in PDDL domains, this method learns a GNN policy via behavior cloning using plans from the training problems.
The architecture and training method are taken from \cite{ploi}; see Appendix~\ref{sec:gnndetails}.
\item \textbf{Random}: a trivial baseline that selects among applicable actions uniformly at random on the test problems.
\end{tightlist}

\subsection{Results and Discussion}

See Table~\ref{tab:mainresults} for our main empirical results.
The results confirm that \Ours{} is able to efficiently guide policy search toward policies that generalize to larger held-out test problems.
Qualitatively, the policies learned by \Ours{} are compact and intuitive.
We highly encourage the reader to refer to Appendix~\ref{sec:learnedpolicies}, which shows a learned policy for each domain. 

Baseline performance is mixed, in most cases failing to match that of \Ours{}.
The strongest baseline is goal count, which confirms findings from previous work~\cite{segovia2021generalized}; however, even in domains with competitive final evaluation performance, the time required to learn a good policy can substantially exceed that for \Ours{}.
This is because goal count suffers from the same limitation as policy evaluation, but to a lesser degree: all policies will receive trivial scores until one is found that reaches at least one goal atom.
Plan comparison has consistently good performance only in Ferry; in this domain alone, the plans generated by the planner are consistent with the compact policy that is ultimately learned.
GNN BC is similarly constrained by the plans found by the planner.
Combo improves only marginally on policy evaluation and plan comparison individually.

\section{Conclusion}
In this work, we proposed \Ours{} as a new approach for generalized planning.
We demonstrated theoretically and empirically that \Ours{} outperforms alternative formulations of GPS, such as policy execution and plan comparison, and found that it is able to efficiently discover compact policies in PDDL domains.
There are many interesting directions for future work, including applying \Ours{} in domains with stochasticity and partial observability, and integrating insights from other approaches to generalized planning.

\section{Acknowledgements}
We gratefully acknowledge support from NSF grant 1723381; from AFOSR grant FA9550-17-1-0165; from ONR grant N00014-18-1-2847; and from MIT-IBM Watson Lab. Tom is supported by an NSF Graduate Research Fellowship.
Any opinions, findings, and conclusions or recommendations expressed in this material are those of the authors and do not necessarily reflect the views of our sponsors. 

% \clearpage

\bibliographystyle{named}
\bibliography{ijcai22}

% \clearpage

\appendix
\section{Proofs}
\label{sec:proofs}

\subsection{Proof of Theorem~\ref{thm:equal}}
\equalthm*
\begin{proof}
Let $\langle O, I, G \rangle$ be the problem in $\Psi$ that $\pi$ does not solve.
First we will show that the \Ours{} score (Algorithm~\ref{alg:pg3score}) is less than or equal to the GPS cost-to-go (Definition~\ref{defn:costtogo}).
Suppose that the \Ours{} score is $n$.
Then for the problem $\langle O, I, G \rangle$, there exists a plan $(\underline{a}_1, \dots, \underline{a}_n)$ with corresponding states $(S_1, \dots, S_{n+1})$, and indices $\mathcal{I} = \{i_1, \dots, i_n\}$ such that $\forall j \in \mathcal{I},$ $\pi(S_j) \neq \underline{a}_j$, and $\forall k \not\in \mathcal{I},$ $\pi(S_k) = \underline{a}_k$.
Consider the following tabular policy:
$$\{(S_j, G) \mapsto \underline{a}_j  : j \in \mathcal{I}\} \cup \{(S, G) \mapsto \pi(S) : S \not\in S_{\mathcal{I}}\}$$
where $S_{\mathcal{I}} = \{S_j : j \in \mathcal{I}\}$.
By construction, this policy is satisficing.
Furthermore, this policy differs from $\pi$ in $n$ table entries, so the cost-to-go is no more than $n$.

In the other direction, suppose towards a contradiction that the \Ours{} score is again $n$, but the GPS cost-to-go is $n' < n$.
Observe that the successor function used in policy-guided planning induces a modified search graph with respect to the search graph in the original problem.
Whereas all edge costs in the original search graph are unitary by Assumption~\ref{assumption:allplanscomparison}, the modified search graph contains new zero-cost edges corresponding to the policy's actions.
Since the GPS cost-to-go is $n'$, there exists a satisficing policy $\pi'$ that differs from $\pi$ in $n'$ states.
Let $(\underline{a}_1, \dots, \underline{a}_n)$ be the plan for policy $\pi'$ with corresponding states $(S_1, \dots, S_{n+1})$.
Since the policy is deterministic, these states do not have repeats (otherwise the policy could not be satisficing).
The cost of this plan in the modified search graph is therefore equal to $n'$.
However, by supposition, the minimum cost path in the modified search graph is $n$, a contradiction.
Therefore we can conclude that the \Ours{} score is equal to the cost-to-go.
\end{proof}

\subsection{Proof of Theorem~\ref{thm:costtogo}}
\admissiblethm*
\begin{proof}
Suppose to the contrary that PG3 is not GPS admissible.
Then there exists a policy $\pi$ and training problems $\Psi$ where the PG3 score $n$ exceeds the GPS cost-to-go $n'$.
Since PG3 accumulates per-plan scores using a \emph{max} (Algorithm~\ref{alg:pg3score}), there is a corresponding problem $\langle O, I, G \rangle$ in $\Psi$ that represents the \emph{argmax}.
In other words, the PG3 score for $\{\langle O, I, G \rangle \}$ is $n$. From Theorem~\ref{thm:equal}, we know that the GPS cost-to-go for $\pi$ on $\{\langle O, I, G \rangle \}$ is equal to $n$.
Furthermore, the GPS cost-to-go $n'$ for the full set $\Psi$ cannot be less than $n$, since each additional problem contributes non-negatively to the GPS cost-to-go in the tabular setting.
Thus $n \le n'$, a contradiction.
\end{proof}

% \subsection{Proof of Lemma~\ref{lem:allplanscomparison}}
% \mainlem*
% \begin{proof}
% Suppose that the GPS cost-to-go is $n$; there must then exist a satisficing policy $\pi'$ that differs in $n$ entries from $\pi$.
% Let $(\underline{a}_1, \dots, \underline{a}_n)$ be the plan for policy $\pi'$ with corresponding states $(S_1, \dots, S_{m+1})$.
% Since the policy is deterministic, these states do not have repeats (otherwise the policy could not be satisficing).
% This plan therefore diverges from $\pi$ in exactly $n$ states.
% Furthermore, no plan that diverges from $\pi$ in fewer than $n$ states can exist, or else there would be a corresponding satisficing policy that differs from $\pi$ in fewer than $n$ entries.
% Thus the all plans comparison score for $\pi$ must be $-\frac{n}{\ell}$, where $\ell$ is the normalizing maximum plan length.
% \end{proof}

\section{Induce Rule from Plans Operator}
\label{sec:triangletablesopt}

Here we describe in detail the final search operator, with a detailed walkthrough in the next subsection.
This search operator proposes a modification to a given policy based on example plans.
The plans are cached from the score function when possible; for example, \Ours{} uses the policy-guided plans, and plan comparison uses the planner-generated plans.
This operator suggests a modification to the candidate policy by identifying the last \emph{missed action} --- an action that does not match the candidate's output at the corresponding \emph{missed state} in one of the plans --- and computing a new rule for the policy that would ``correct'' the policy on that example.

New rules are computed via an extension of the \emph{triangle tables} approach of \cite{triangletables}, which was originally proposed for macro action synthesis.
Given a plan subsequence, triangle tables creates a macro by computing a \emph{preimage} for (an edited subsequence of) the subsequence.
The plan subsequence that we use as input to triangle tables is one that begins with the missed action and ends with the first subsequent goal literal that is permanently achieved, that is, added to the state and never removed in the remainder of the plan.
The preimage returned by triangle tables comprises the preconditions $\textsc{Pre}(\rho')$ of the new rule $\rho'$; the goal conditions $\textsc{Goal}(\rho')$ consist of the single goal literal at the tail of the subsequence; and the action $\textsc{Act}(\rho')$ is derived from the missed action.
The new rule $\rho'$ is inserted into $\pi$ before the first existing rule that was applicable in the missed state; if none were applicable, the new rule is appended to the end.

A limitation of the approach described thus far is that the preconditions added to the new rule can be overly specific.
We would prefer to introduce a more general rule, to promote generalization to new problems, so long as that rule still returns the missed action in the missed state.
With that intuition in mind, we implement a final extension of the triangle tables approach.
Instead of starting with the entire preimage as the precondition set for the rule, we start with a smaller set, computed as follows. All preconditions in the preimage which only involve objects in the failed action and the first permanently-achieved subsequent goal literal are included in proposed rule. If upon checking that the proposed rule does not produce the missed action, we iteratively step through each action in the plan subsequence used in the triangle table and add new preconditions to that rule, attempting to reproduce the missed action. The new preconditions that we add at each iteration are determined as follows: we add all preconditions involving objects from the previously proposed rule and the current action in the plan subsequence. If the missed action is never produced, the final proposed rule whose preconditions are equivalent to the entire preimage is returned.

In preliminary experiments, we found that this operator is very helpful in most domains, such as the Ferry example shown in the following section.
But in the Forest domain, the operator can lead to overfitting and syntactically large policies.
We therefore disable this operator for Forest; note that this change applies to all GPS methods, including \Ours{}, policy evaluation, plan comparison, combo, and goal count.

\subsection{Example}
\label{sec:induceexample}

\begin{figure}[h]
\centering
\includegraphics[scale=0.4]{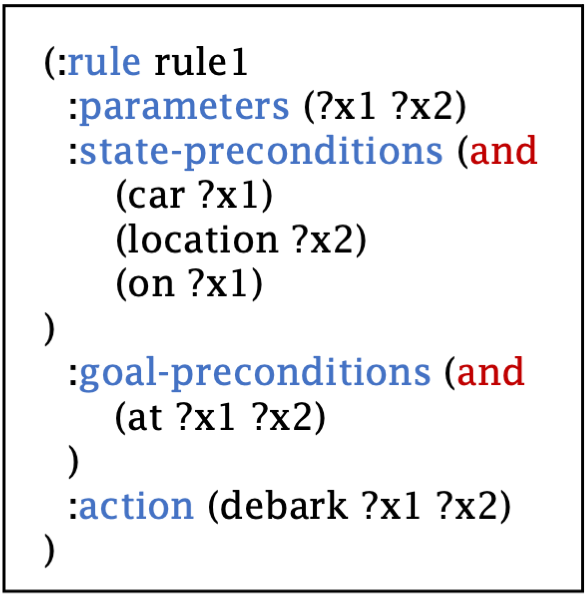}
\caption{Example input to Induce Rule from Plans operator; see Section \ref{sec:induceexample}}.
\label{fig:induce1}
\end{figure}

We now provide an example of the Induce Rule from Plans operator in the Ferry domain.
Consider the policy in Figure \ref{fig:induce1} as input to the operator, and consider the following plan:
\begin{enumerate}
    \item \texttt{sail(l8, l7)}
    \item \texttt{board(c0, l7)}
    \item \texttt{sail(l7, l0)}
    \item \texttt{debark(c0, l0)}
    \item \texttt{sail(l0, l2)} 
    \item \texttt{board(c4, l2)}
    \item \texttt{sail(l2, l8)}
    \item \texttt{debark(c4, l8)}
\end{enumerate}
for a problem where $G = \{\texttt{at(c4, l8), at(c0, l0)}\}$. The initial state is too large to print, but includes \texttt{atFerry(l8), at(c0, l7), at(c4, l2)} among others.

Referring to the policy, the last missed action in this plan is \texttt{sail(l2, l8)}.
The subsequent goal literal that is permanently achieved is \texttt{at(c4, l8)}, and it is achieved on the final time step, so the input to triangle tables is:
\begin{enumerate}
    \setcounter{enumi}{6}
    \item \texttt{sail(l2, l8)}
    \item \texttt{debark(c4, l8)}
\end{enumerate}
The preimage then returned by triangle tables is:

\begin{verbatim}{(car c4), (location l2), (location l8), 
(notEq l2, l8), (not (at l2, l8)), (atFerry l2), 
(on c4)}
\end{verbatim}

Replacing all objects with placeholders, we arrive at a new rule, which we then insert into the policy following the existing rule; see Figure \ref{fig:induce2}.
In this example, the additional generalization step described in the section above does not change the policy, so the policy in Figure \ref{fig:induce2} is the final output of the operator.

\begin{figure}[h]
\centering
\includegraphics[scale=0.4]{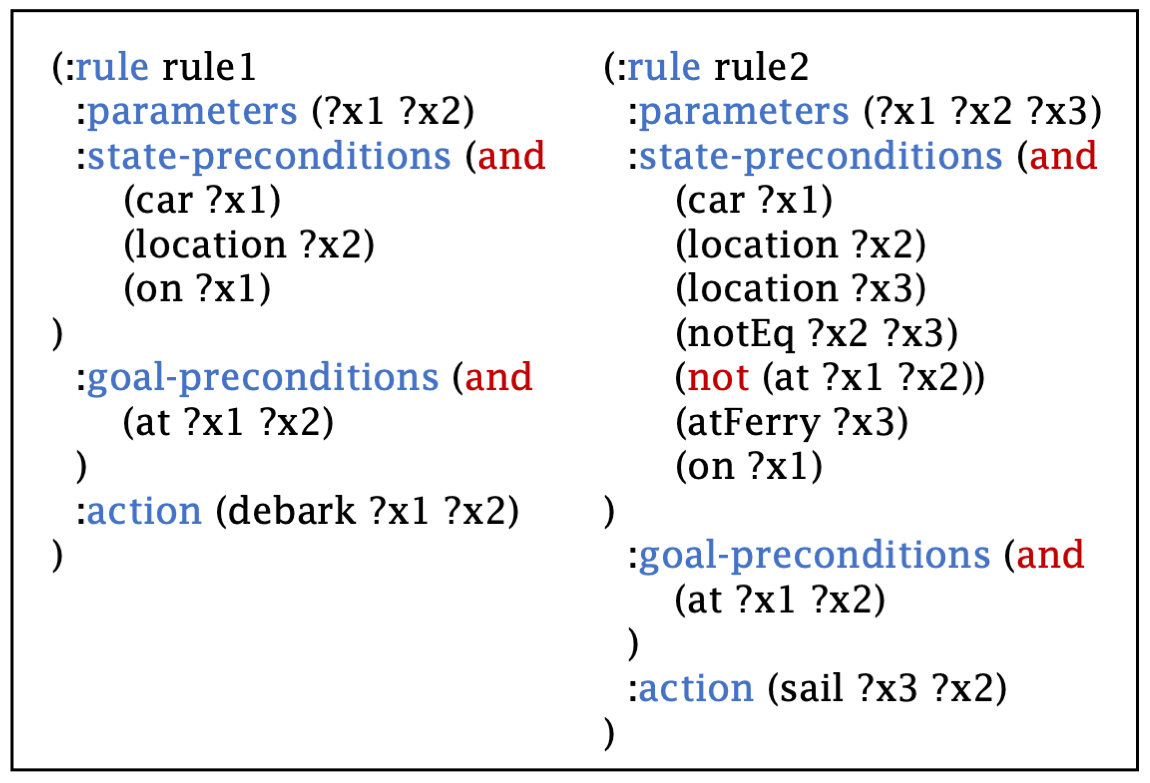}
\caption{Example output from Induce Rule from Plans operator; see Section \ref{sec:induceexample} and Figure \ref{fig:induce1}.}
\label{fig:induce2}
\end{figure}

% Recall that our aim is to amend the decision list policy so that it returns the missed action in the missed state.
% Given the procedure described above, it is possible to propose a rule that will not produce the desired missed action; if the preconditions of the rule are overly general, a different action may be produced instead.
% With this in mind, we implement one additional extension.
% If upon checking that the proposed rule does not produce the missed action, we iteratively step through each action in the plan subsequence used in the triangle table and add new preconditions to that rule, attempting to reproduce the missed action. The new preconditions that we add at each iteration are determined as follows: we add all preconditions involving objects from the previously proposed rule and the current action in the plan subsequence. \todo{ryan: how?}

\section{Additional GPS Optimizations}
\label{sec:policysearchopt}

There are two additional optimizations that we use in experiments to accelerate GPS.
First, in addition to the score functions described previously, we add a small \emph{size penalty} to the score: $$\textsc{Penalty}(\pi) = -w \sum_{\rho \in \pi}|\textsc{Pre}(\rho)| + |\textsc{Goal}(\rho)|$$
where $w$ is a weight selected to be small enough ($0.00001$ in experiments) so that the penalty only breaks ties in the original score function.
This regularization encourages GPS to discover policies with shorter description length, encoding an Occam's razor bias.

The second optimization is motivated by the observation that two policies may be syntactically distinct, but semantically equivalent with respect to all states and goals of interest.
This leads to two challenges: (1) it is difficult to compute semantic equivalence, especially with respect to the unknown set of states and goals of interest; and (2) we cannot necessarily prune policies according to semantic equivalence because their neighbors in policy search space may be semantically different.
Nonetheless, we would like to bias policy search so that it avoids redundancy.

To address the challenge of computing semantic equivalence, we resort to the following approximate approach.
At the beginning of GPS, we use a planner to collect a set of example plans on the training problems (as in the plan comparison scoring function).
The states and goals encountered in these example plans are used to approximate the ``states and goals of interest.''
During GPS, we execute each candidate policy on each of these states and goals to compute a \emph{semantic identifier}.
We maintain a map from semantic identifiers to counts, with the counts representing the number of times that a policy with that semantic identifier has been expanded.
When a new policy is generated, it is inserted into the GBFS queue (Algorithm~\ref{alg:pg3}) with priority (count, score), with the effect being that semantically unique policies are always explored first.
This optimization maintains the completeness of GPS while favoring policies that are approximately semantically unique.

\begin{table*}
	\centering
	\footnotesize
	\begin{tabular}{| l | p{0.7cm} | p{0.7cm} | p{0.7cm} | p{0.7cm} | p{0.7cm} | p{0.7cm} | p{0.7cm} | p{0.7cm} | p{0.7cm} | p{0.7cm} | p{0.7cm} | p{0.7cm} | p{0.7cm} | p{0.7cm} | }
	\hline
	\multicolumn{1}{|c|}{} &\multicolumn{2}{c|}{PG3 (Ours)} &
	\multicolumn{2}{c|}{Policy Eval} &
	\multicolumn{2}{c|}{Plan Comp} &
	\multicolumn{2}{c|}{Combo} &
	\multicolumn{2}{c|}{Goal Count} &
	\multicolumn{2}{c|}{GNN BC} &
	\multicolumn{2}{c|}{Random} \\
	\hline
	Domains &
	\;{\scriptsize Eval} & \:{\scriptsize Time} &
	\;{\scriptsize Eval} & \:{\scriptsize Time} &
	\;{\scriptsize Eval} & \:{\scriptsize Time} &
	\;{\scriptsize Eval} & \:{\scriptsize Time} &
	\;{\scriptsize Eval} & \:{\scriptsize Time} &
	\;{\scriptsize Eval} & \:{\scriptsize Time} &
	\;{\scriptsize Eval} & \:{\scriptsize Time} \\
	\hline
	Delivery & 0.00 & 0.8 & 0.00 & \;\;\;-- & 0.32 & \;\;\;-- & 0.42 & 1659.3 & 0.00 & 19.4 & 0.41 & 27.8 & 0.00 & \;\;\;-- \\
	Forest & 0.00 & 80.0 & 0.00 & 472.2 & 0.30 & \;\;\;-- & 0.00 & 689.2 & 0.33 & 517.6 & 0.07 & \;\;\;-- & 0.03 & \;\;\;-- \\
	Gripper & 0.00 & 20.8 & 0.00 & \;\;\;-- & 0.42 & 0.2 & 0.00 & \;\;\;-- & 0.00 & \;\;\;-- & 0.13 & \;\;\;-- & 0.00 & \;\;\;-- \\
	Miconic & 0.00 & 606.1 & 0.00 & \;\;\;-- & 0.30 & \;\;\;-- & 0.00 & \;\;\;-- & 0.31 & 1810.6 & 0.12 & \;\;\;-- & 0.05 & \;\;\;-- \\
	Ferry & 0.00 & 5.2 & 0.00 & \;\;\;-- & 0.32 & 0.3 & 0.00 & \;\;\;-- & 0.00 & 2340.0 & 0.18 & \;\;\;-- & 0.00 & \;\;\;-- \\
	Spanner & 0.00 & 15.9 & 0.00 & 3.0 & 0.53 & 0.4 & 0.00 & 17.6 & 0.00 & 2.0 & 0.42 & 12.5 & 0.04 & \;\;\;-- \\\hline
	\end{tabular}

	\caption{\textbf{Standard deviations for policy learning results.} See Table~\ref{tab:mainresults} for further description.}
	\label{tab:mainresultsstd}
\end{table*}

\section{Additional Experimental Details}
\label{sec:domaindetails}

All code is implemented in Python and run on a quad-core AMD64 processor with 16GB RAM, in Ubuntu 18.04.
We use PDDLGym~\cite{pddlgym} for simulation.

Below are additional details for each domain and train/test problem distributions, which are procedurally generated separately for each random seed. The specific problems used in this work are supplied with the code.

\begin{itemize}
    \item \textbf{Forest}
    
    \begin{itemize}
        \item \emph{Max horizon}: 100
        \item \emph{Train problems}: 10 problems, which have grids with heights and widths ranging from 8 to 10, for a total of \textbf{64--100 objects}.
        \item \emph{Test problems}: 30 problems, which have grids with heights and widths ranging from 10 to 12, for a total of \textbf{100--144 objects}.
    \end{itemize}
    
    \item \textbf{Delivery}
    \begin{itemize}
        \item \emph{Max horizon}: 1000
        \item \emph{Train problems}: 5 problems, which have 5--9 locations, 2--4 delivery requests, and 2--4 packages, for a total of \textbf{9--17 objects}.
        \item \emph{Test problems}: 30 problems, which have 30--39 locations, 20--29 requests, and 0--10 more packages than requests, for a total of \textbf{70--110 objects}.
    \end{itemize}
    \item \textbf{Miconic}
    \begin{itemize}
        \item \emph{Max horizon}: 1000
        \item \emph{Train problems}: 10 problems, which have 1--2 buildings, and for each building, 5--10 floors and 1--5 passengers, for a total of \textbf{6--30 objects}.
        \item \emph{Test problems}: 30 problems, which have 1--5 buildings, and for each building, 10--20 floors and 1--10 passengers, for a total of \textbf{11--150 objects}.
    \end{itemize}
    \item \textbf{Gripper}
    \begin{itemize}
        \item \emph{Max horizon}: 1000
        \item \emph{Train problems}: 10 problems, which have 5--10 balls and 15--20 rooms for a total of \textbf{20--30 objects}.
        \item \emph{Test problems}: 30 problems, which have 20--30 balls and 40--50 rooms for a total of \textbf{60--80 objects}.
    \end{itemize}
    \item \textbf{Ferry}
    \begin{itemize}
        \item \emph{Max horizon}: 1000
        \item \emph{Train problems}: 10 problems, which have 10--15 locations and 3--5 cars, for a total of \textbf{13--20 objects}.
        \item \emph{Test problems}: 30 problems, which have 20--30 locations and 10--20 cars, for a total \textbf{30--50 objects}.
    \end{itemize}
    \item \textbf{Spanner}
    \begin{itemize}
        \item \emph{Max horizon}: 1000
        \item \emph{Train problems}: 10 problems, which have 3--5 spanners, 3--5 nuts, and 3--5 locations, for a total of \textbf{9--15 objects}.
        \item \emph{Test problems}: 30 problems, which have 10--20 spanners, 10--20 nuts, and 10--20 locations, for a total of \textbf{30--60 objects}.
    \end{itemize}
    
\end{itemize}

\section{Additional Results}

\subsection{Standard Deviations for Main Results}

See Table~\ref{tab:mainresultsstd} for standard deviations on the main policy learning results presented in Table~\ref{tab:mainresults}.

\subsection{Qualitative Results: Learned Policies}
\label{sec:learnedpolicies}

See Figures 
\ref{fig:forestpolicy}, \ref{fig:deliverypolicy}, \ref{fig:miconicpolicy},
\ref{fig:gripperpolicy},
\ref{fig:ferrypolicy},
\ref{fig:spannerpolicy}
for policies learned by \Ours{}.

\begin{figure*}[t]
\centering
\includegraphics[scale=0.5]{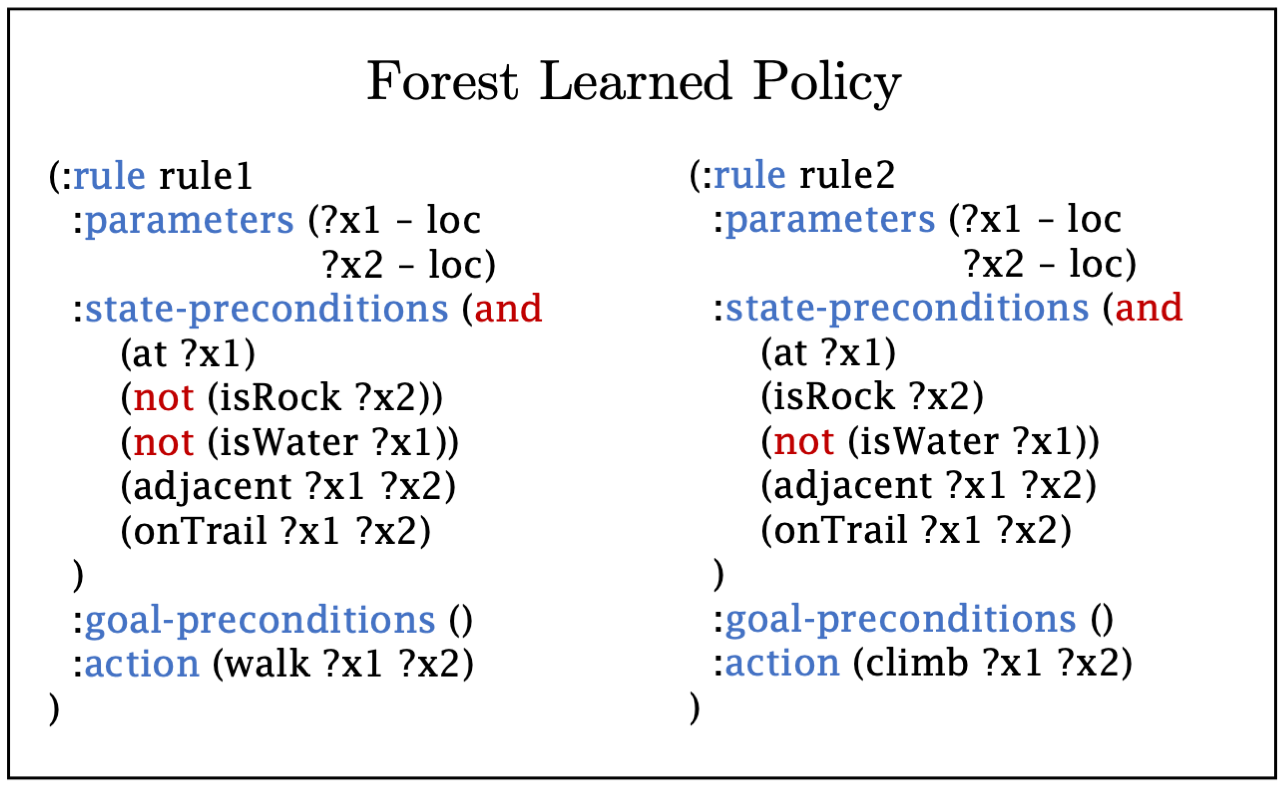}
\caption{\textbf{Learned policy for the Forest domain.} This policy has the agent follow the marked trail until a rock is encountered, at which point it climbs over the rock. For this policy and the others, it is important to remember that the rules are \emph{ordered} (left to right); each rule will only be used when the previous rules are not applicable.}
\label{fig:forestpolicy}
\end{figure*}

\begin{figure*}
\centering
\includegraphics[scale=0.5]{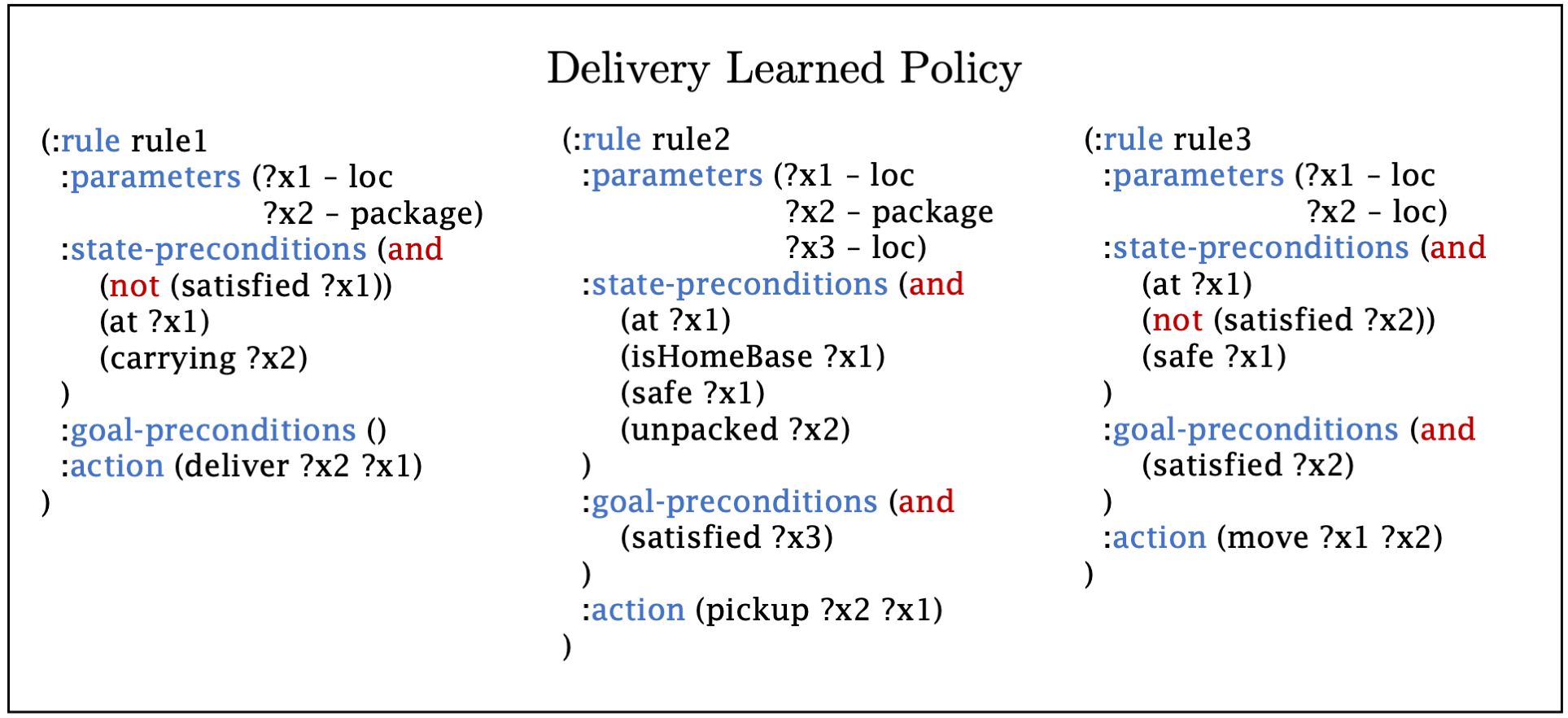}
\caption{\textbf{Learned policy for the Delivery domain.} This policy has the agent deliver a package whenever it is at a location that wants a package but does not yet have one; it also picks up packages whenever there are packages available at the home base; and it moves to a location that wants a package, but has not yet received one.}
\label{fig:deliverypolicy}
\end{figure*}

\begin{figure*}
\centering
\includegraphics[width=\textwidth]{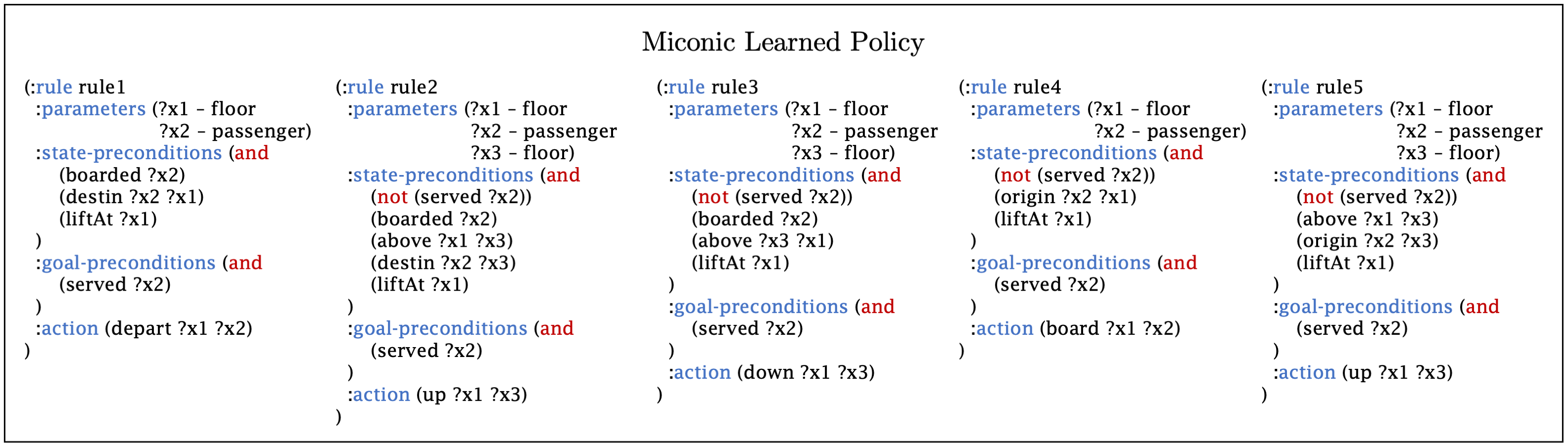}
\caption{\textbf{Learned policy for the Miconic domain.} This policy has the agent drop off passengers whenever they are boarded and at their destination floor; move up or down, depending on whether the destination floor is above or below, whenever a passenger is boarded but not yet at their destination floor; board a passenger whenever there is a passenger at the elevator's current floor that is not yet at their destination; and to start, moves up to a floor where there is a passenger waiting (the elevator always starts on the ground floor).}
\label{fig:miconicpolicy}
\end{figure*}

\begin{figure*}
\centering
\includegraphics[width=\textwidth]{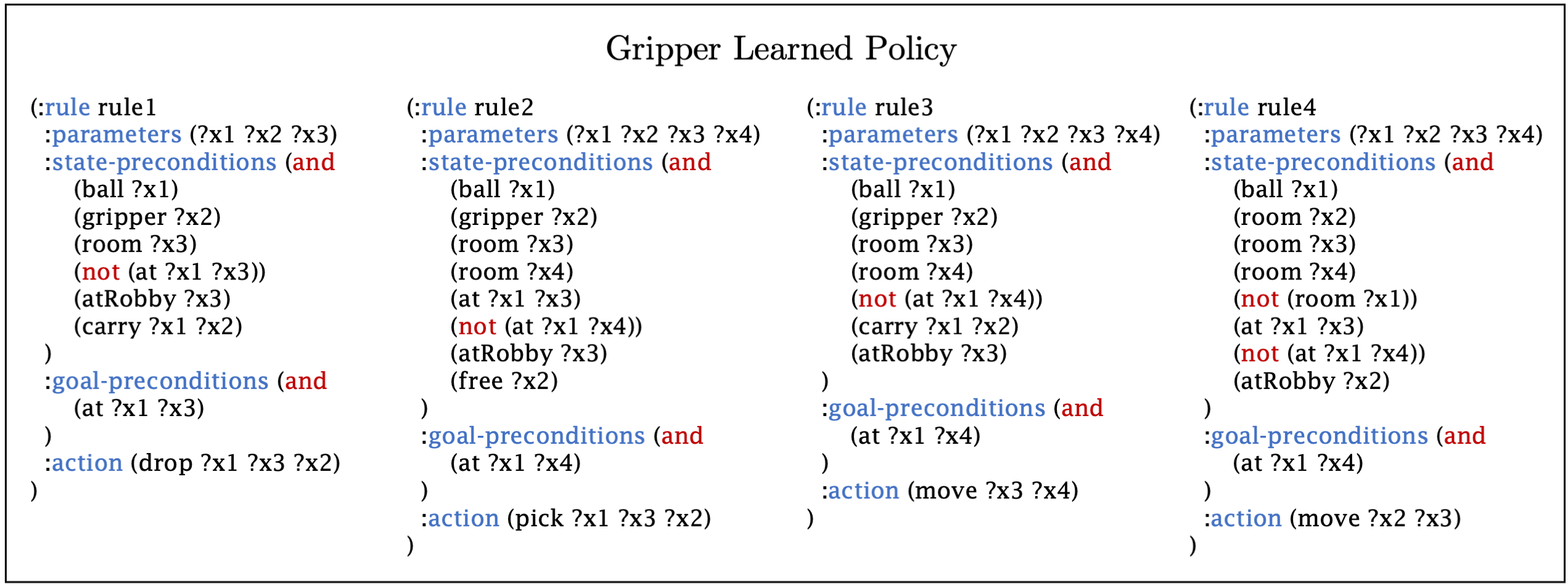}
\caption{\textbf{Learned policy for the Gripper domain.} Note that objects and parameters in this domain are untyped. This policy has the agent drop a ball that it's carrying when in the target room for that ball; pick a ball when the gripper is free and the ball is not yet in its target room; move to a target room when carrying a ball whose target room is not the robot's current room; and initially, moves to a room with some ball that is not yet at its target room.}
\label{fig:gripperpolicy}
\end{figure*}

\begin{figure*}
\centering
\includegraphics[width=\textwidth]{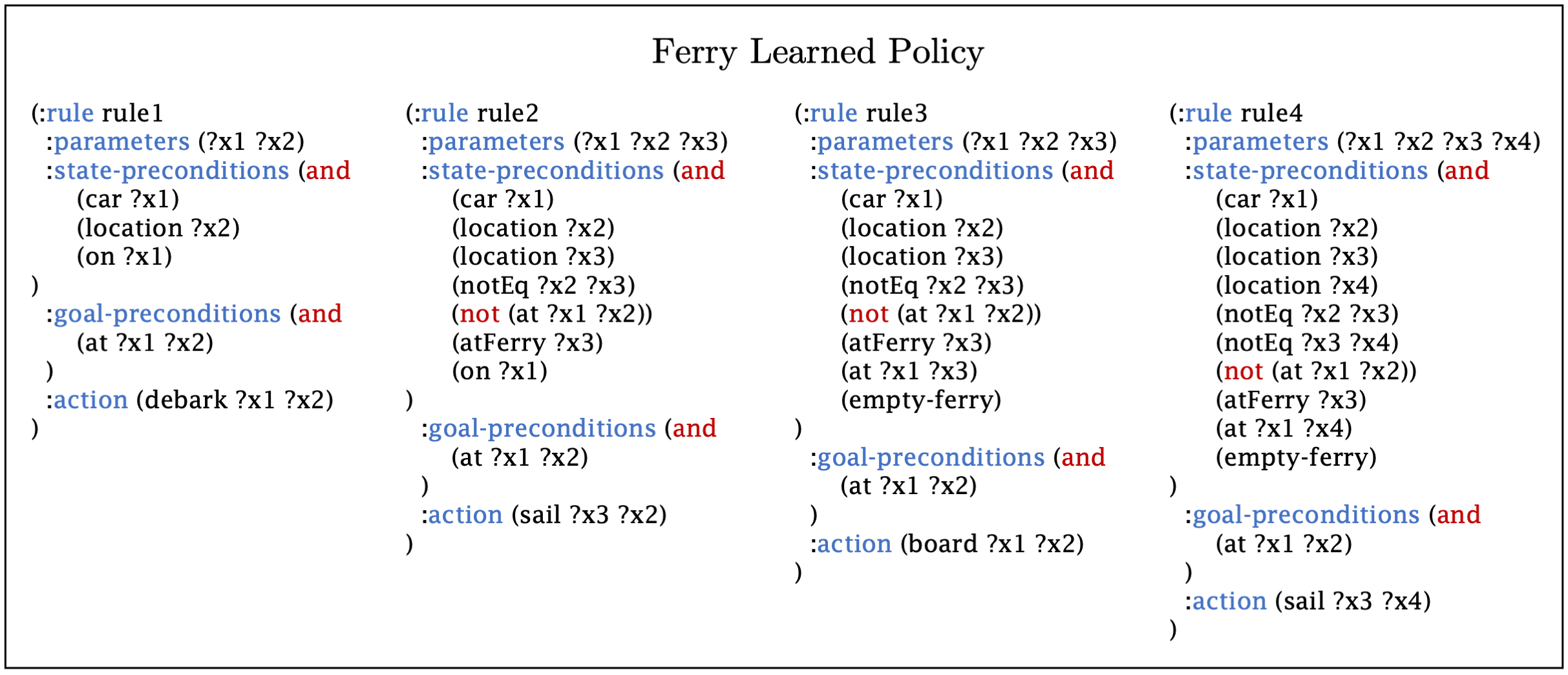}
\caption{\textbf{Learned policy for the Ferry domain.} Note that objects and parameters in this domain are untyped. This policy has the agent drop off (debark) cars whenever they are on the ferry but at their destinations; sail to a destination location when there is a car loaded; board a car when the ferry is empty and there is a car at the ferry's current location that has not yet been served; and sail to a location with a car that has not yet been served otherwise.}
\label{fig:ferrypolicy}
\end{figure*}

\begin{figure*}
\centering
\includegraphics[scale=0.5]{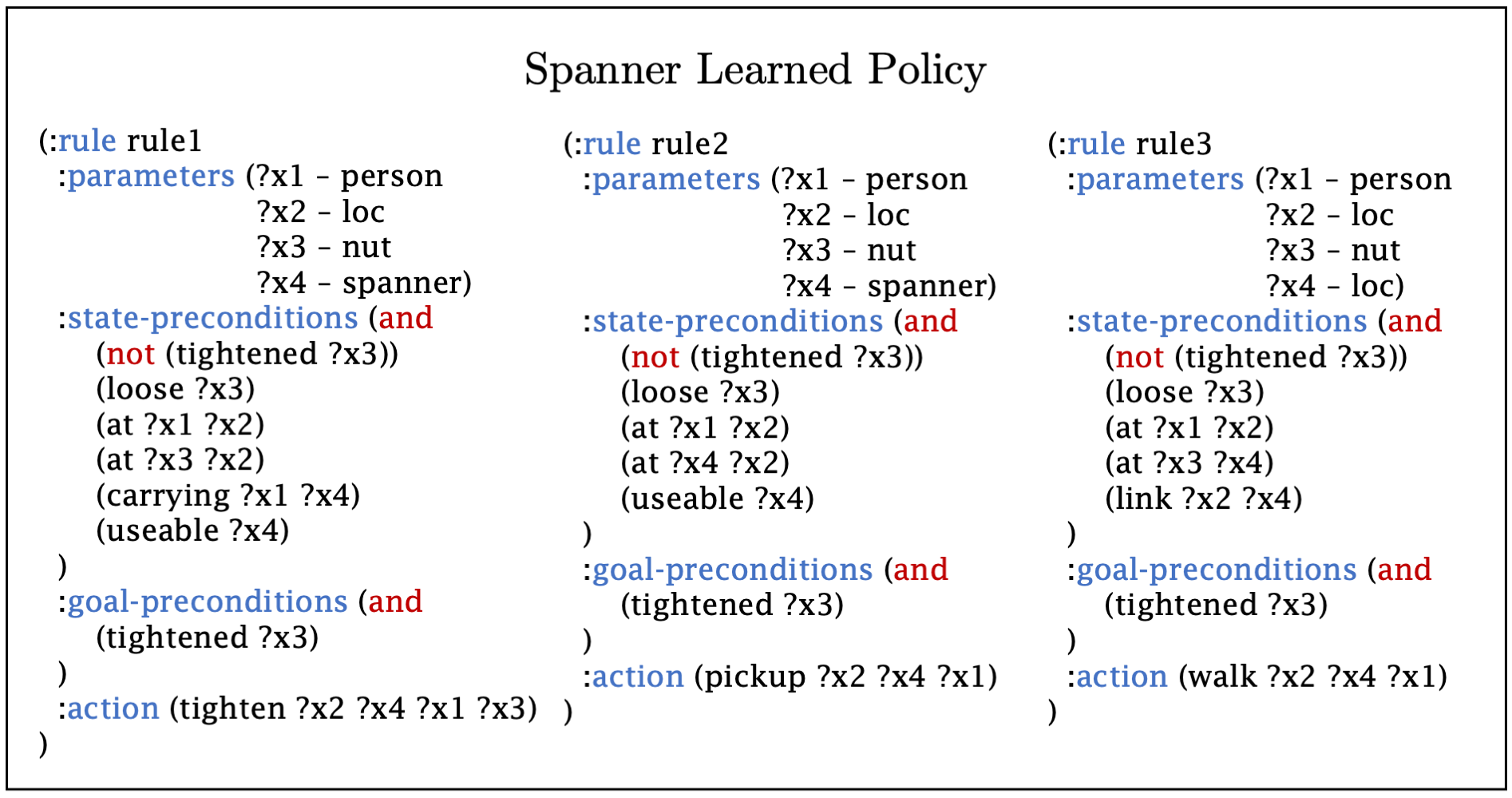}
\caption{\textbf{Learned policy for the Spanner domain.} This policy has the agent tighten a nut whenever there is a loose nut at the agent's location and the agent is carrying a spanner (wrench); pick up a spanner whenever it is available at the current location and not yet carried; and walk to a location with a loose nut otherwise.}
\label{fig:spannerpolicy}
\end{figure*}

\section{Graph Neural Network Baseline Details}
\label{sec:gnndetails}
The GNN behavioral cloning baseline is implemented in PyTorch, version 1.9.0. Our architecture and learning hyperparameters are identical to those used in the reactive policy baseline of~\cite{ploi}. All GNNs node and edge modules are fully connected neural networks with one hidden layer of dimension 16, ReLU activations, and layer normalization. Message passing is performed for $K=3$ iterations.
Training uses the Adam optimizer with learning rate $0.001$ for 100000 epochs with a batch size of 16. Performance is evaluated by directly executing the learned policy on the test problems for up to 1000 steps. The reported validation accuracy is the percent of test problem executions that reach the goal after the last training epoch.

\end{document}